\let\footnote=\endnote
\def\be{\begin{equation}}
\def\ee{\end{equation}}
\def\ben{\begin{eqnarray}}
\def\een{\end{eqnarray}}
\def\ds{\displaystyle}
\def\Ga{\mathcal{L}}
\def\fl{\mathrm{fl}}
\def\D{\mathcal{D}}
\def\Dt{\mathcal{\til{D}}}
\def\R{\mathbb{R}}
\def\N{\mathbb{N}}
\def\V{\mathbb{S}}
\def\S{\mathcal{S}}
\def\til{\tilde}
\def\topc{{{\top}}}
\def\bel{\begin{linenomath}}
\def\eel{\end{linenomath}}
\def\vI{\mathbf{I}}
\def\vA{\mathbf{A}}
\def\vT{\mathbf{T}}
\def\vG{\mathbf{G}}
\def\vHk{\mathbf{H}_k}
\def\vGk{\mathbf{G}_{k}}
\def\vS{\mathbf{S}}
\def\vUk{\mathbf{U}_{k}}
\def\vLk{\mathbf{\Lambda}_{k}}
\def\vd{\mathbf{d}}
\def\vdt{\mathbf{\til{d}}}
\def\vf{\mathbf{f}}
\def\vg{\mathbf{g}}
\def\vA{\mathbf{A}}
\def\vr{\mathbf{r}}
\def\O{\operatorname{O}}
\def\Range{\operatorname{Range}}
\def\vP{\mathbf{P}}
\def\op{\hat{\operatorname{\vP}}}
\def\oj{\hat{\operatorname{J}}}
\def\kq{k_q}
\def\qs{q^\star}
\def\kqs{k_{\qs}}
\def\Nb{N_b}
\newcommand{\la}{\left\langle}
\newcommand{\ra}{\right\rangle}
\newcommand{\eps}{\epsilon}
\newtheorem{proposition}{Proposition}
\newtheorem{lemma}{Lemma}
\newtheorem{theorem}{Theorem}
\def\Nx{N_x}
\def\Ny{N_y}
\newcommand{\Spann}{{\mbox{\rm{span}}}}
\begin{document}
%\linenumbers
\title{Analysis of the 
Self Projected Matching Pursuit Algorithm}
\author{Laura Rebollo-Neira\\
Mathematics Department, Aston University\\
B4 7ET, Birmingham, United Kingdom\\\\
Miroslav Rozlo\v zn\' ik\\
Institute of Mathematics, Czech Academy of Sciences\\
{\u{Z}}itn{\'a} 25, CZ -- 115 67 Praha 1, Czech Republic\\\\
Pradip Sasmal\\ 
Department of Electrical Communication Engineering\\
Indian Institute of Science, Bangalore\\
Karnataka, 560012, India} 
\maketitle
%\baselineskip = 1.7 \baselineskip
\begin{abstract}
The convergence and numerical analysis of a low memory 
implementation of the Orthogonal Matching Pursuit 
greedy strategy, 
 which is termed Self Projected Matching Pursuit,
is presented. This approach renders an
 iterative way of solving the least 
squares problem with much less storage requirement 
than direct linear algebra techniques. Hence, it is  
 appropriate for solving large linear systems. 
The analysis highlights its suitability within the 
class of well posed problems.
 
\end{abstract}
{\bf{Keywords:}} Sparse Representation; Greedy 
Pursuit Strategies; Orthogonal Matching Pursuit;
 Self Projected Matching Pursuit; 
Least Squares of Large Systems; Iterative Projections.
\section{Introduction} 
Sparse representation refers to the process by which
a signal is transformed
 in order to reduce its dimensionality.
Traditional methods implement the
transformation using fast orthogonal transforms.
Higher levels of sparsity are attained,
in many cases,
 if the transformation is carried out using
a large redundant set called  a dictionary.
For the most part this method is implemented by 
   minimization
of the $l_1$-norm \cite{CDS01,DT05,Eld10} and the
so-called greedy strategies. The latter consist in
 adaptively constructing a signal representation
 as a linear superposition of elements
taken from the dictionary. In this contribution
we focus on the analysis of a low memory implementation
of a particular method within this category.

 Greedy strategies have been the subject of extensive 
research in the last two decades \cite{MZ93,PRK93, 
Nat95,DT96, VNT99, RNL02, ARNS04, 
ARN06, Tro04, GV06, DTD06,BD08, NT09}
and currently support diverse applications
\cite{LSL14, YRV16, YLC19}.  
The simplest, yet very effective greedy algorithm 
for the sparse representation of large signals,  
was introduced to the 
signal processing community in \cite{MZ93} with the name 
of Matching Pursuit (MP). It had 
previously appeared as a regression technique in
statistics \cite{FS81,Jon87},
where the convergence property was established.
While MP converges asymptotically to a signal in the 
linear span of the dictionary, or to its orthogonal 
projection if the signal is out of that space, 
  the approach is not stepwise optimal because it does not 
yield an orthogonal projection at each 
step.
A refinement to MP which  fulfils this requirement 
is refereed to as Orthogonal Matching Pursuit (OMP) 
\cite{PRK93}. 
If implemented by direct methods the OMP approach is 
 very effective 
 up to some dimensionality. When processing 
large signals, however, the storage requirements
 frequently exceed the memory capacity of a standard  
 computer.
An alternative implementation of OMP, 
which requires 
much less memory than direct implementations  
 is considered in \cite{RNB13}.
The approach is termed Self Projected Matching Pursuit 
(SPMP). It produces the orthogonal projection of the  signal,
at each iteration, by applying MP 
using a sub-dictionary consisting only of
the already selected elements.  
 A convenient feature of SPMP when applied in 2D 
(SPMP2D)  \cite{RNB13,LRN17} and 
 3D  (SPMP3D) \cite{RNW19} is that it fully exploits
 the separability of dictionaries.
 Nevertheless, until now the method had not been
 analyzed. Thus, the main contributions of this paper are: 
\begin{itemize}
\item
The convergence analysis of the SPMP approach, which deals with those cases where the standard implementation of the 
OMP method is not feasible due to storage requirements. 
\item
The error analysis of the SPMP approach.
\end{itemize}
Additionally, the SPMP approach is extended to consider its
 Hierarchized Block Wise (HBW) version for
approximating a signal partition subjected to
 a global constraint on sparsity.

The paper is organized as follows: Sec.~\ref{SPMP}
 recalls the SPMP algorithm,  proves 
 the power law for the  convergence  rate 
of the self projection step 
  and  develops its  numerical analysis. 
 In Sec.~\ref{HBWSPMP}  the applicability of the method 
 is extended by dedicating the algorithm to the 
 approximation of non stationary signals by partitioning. 
The final conclusions are presented in Sec.~\ref{Con}. 

\section{Self Projected Matching Pursuit (SPMP)}
\label{SPMP}
Throughout the paper $\R$  and $\N$
represent the sets of real 
and natural numbers, respectively. 
Boldface fonts  are used to indicate Euclidean vectors
or matrices and standard mathematical fonts  to
indicate components,  e.g., $\vd \in \R^N$ is a vector of 
$N$-components $d(i) \in \R\,, i=1,\ldots,N$
and $\vA \in \R^{\Nx \times \Ny}$ a matrix of elements $A(i,j) \in \R\,,\,i=1,\ldots, \Nx, \, j=1,\ldots,\Ny$. The  transpose of $\vA$ is denoted as $\vA^\topc$.
The operation 
$\la \cdot,\cdot \ra$ indicates the Euclidean inner
product and  $\| \cdot \|$ the induced norm, i.e. 
$\| \vd \|^2= \la \vd, \vd \ra$, with the usual 
inner product definition: For $\vg \in \R^N$ 
and $\vf \in \R^N$
\be
\la \vf, \vg \ra = \sum_{i=1}^N f(i) g(i). 
\ee
Let's consider a finite set $\D$ of $M$ 
of normalized 
 vectors 
$\D=\{\vd_n \in \R^{N}\,; \|\vd_n\|=1\}_{n=1}^M$ 
 and let's define $\V_M= \Spann(\D)$, which could be 
 $\R^N$. For $M >\text{dim}(\V_M)$    
 the set $\D$ is a redundant dictionary  
 and the elements are called {\em{atoms}}. 
Given a signal, as a vector $\vf \in \R^{N}$, the 
$k$-term {\em{atomic decomposition}} for its
approximation takes the form
\be
\label{atoq}
\vf^{k}= \sum_{j=1}^{k}
c(j) \vd_{\ell_j}.
\ee
The problem of how to select from $\D$ the smallest number
of $k$ 
atoms $\vd_{\ell_j},\,j=1\ldots,k$, such that 
 $\|\vf^k - \vf\| < \rho$,  for a given tolerance 
parameter $\rho$, is an NP-hard problem
\cite{Nat95}. In practical applications 
 one looks for `tractable sparse' solutions.
This is to say 
a representation involving a number of $k$-terms, 
 with $k$ acceptably small in relation to $N$.
The simplest approach to tackle this problem is MP. It 
evolves by successive approximations as follows 
\cite{MZ93}:
 Setting $k=0$ and starting with an initial approximation
$\vf^0=0$ and residual $\vr^{0} = \vf$, 
the algorithm
  progresses  by sub-decomposing the $k$-th order residual 
in the form 
\be
\vr^{k} =
\la \vd_{\ell_{k+1}}, \vr^{k} \ra \vd_{\ell_{k+1}}
+ \vr^{k+1},
\label{tech:1}
\ee
where $\vd_{\ell_{k+1}}$ is the atom corresponding to the index selected as 
\be
\label{selMP}
\ell_{k+1}=  \operatorname*{arg\,max}_{\substack{n=1,\ldots,M}} |\la \vd_{n} , \vr^{k} \ra|.
\ee
This atom is used to update the approximation $\vf^k$ as
\be
\label{upfk}
\vf^{k+1} = \vf^{k} + \la \vd_{\ell_{k+1}}, \vr^{k} \ra  \vd_{\ell_{k+1}}.
\ee
From \eqref{tech:1} it follows that 
$\|\vr^{k+1}\| \le \|\vr^{k}\|$, since 
\be
\|\vr^{k}\|^{2} = |\la \vd_{\ell_{k+1}}, \vr^{k} \ra|^{2} + \|\vr^{k+1}\|^{2}.
\label{tech:2}
\ee
\begin{lemma}
\label{lem}
In the limit $k \rightarrow \infty$, the sequence  $\vf^{k}$
given in \eqref{upfk} 
converges to $\vf$, if $\vf \in \V_{M}$, or to  
$\op_{\V_{M}}\vf$, 
the orthogonal projection of $\vf$ onto
$\V_{M}$, if $\vf \notin \V_{M}$.
\end{lemma}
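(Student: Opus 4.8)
The plan is to reduce the assertion about $\vf^{k}$ to a statement about the residuals $\vr^{k}$, then to exploit the monotone decay of $\|\vr^{k}\|$ to force the selected inner products to vanish, and finally to use the finite dimensionality of $\V_{M}$ to upgrade vanishing inner products into genuine convergence. First I would record the invariant $\vf = \vf^{k}+\vr^{k}$, which holds at $k=0$ because $\vf^{0}=0$ and $\vr^{0}=\vf$, and is preserved at every iteration since the same term $\la \vd_{\ell_{k+1}},\vr^{k}\ra \vd_{\ell_{k+1}}$ that is added to the approximation in \eqref{upfk} is removed from the residual in \eqref{tech:1}. Hence $\vf-\vf^{k}=\vr^{k}$, and $\vf^{k}$ converges to $\op_{\V_{M}}\vf$ precisely when $\vr^{k}$ converges to $\vf-\op_{\V_{M}}\vf$; the entire argument can therefore be carried out on the residual sequence.

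Next I would use the Pythagorean identity \eqref{tech:2}: it shows $\|\vr^{k}\|$ is non-increasing and bounded below by $0$, hence convergent. Summing \eqref{tech:2} telescopically gives
\be
\sum_{k=0}^{\infty} |\la \vd_{\ell_{k+1}},\vr^{k}\ra|^{2} = \|\vf\|^{2} - \lim_{k\to\infty}\|\vr^{k}\|^{2} < \infty,
\ee
so that $|\la \vd_{\ell_{k+1}},\vr^{k}\ra|\to 0$. By the selection rule \eqref{selMP} one has $|\la \vd_{\ell_{k+1}},\vr^{k}\ra| = \max_{n}|\la \vd_{n},\vr^{k}\ra|$, and therefore $\max_{n}|\la \vd_{n},\vr^{k}\ra|\to 0$.

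The structural observation that drives the result is that each step \eqref{tech:1} modifies $\vr^{k}$ only by a multiple of an atom $\vd_{\ell_{k+1}}\in\V_{M}$, so the component of $\vr^{k}$ orthogonal to $\V_{M}$ is independent of $k$ and equals $\vf-\op_{\V_{M}}\vf$. Writing $\vw^{k}:=\op_{\V_{M}}\vr^{k}\in\V_{M}$, it then suffices to prove $\vw^{k}\to 0$. Because every atom lies in $\V_{M}$ we have $\la \vd_{n},\vr^{k}\ra=\la \vd_{n},\vw^{k}\ra$, so the previous step already supplies $\max_{n}|\la \vd_{n},\vw^{k}\ra|\to 0$ with each $\vw^{k}\in\V_{M}$.

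The main obstacle is the final implication $\max_{n}|\la \vd_{n},\vw^{k}\ra|\to 0 \Rightarrow \vw^{k}\to 0$, which is exactly the delicate point in the classical Matching Pursuit convergence proof in infinite dimensions. Here, however, $\V_{M}=\Spann(\D)$ is finite dimensional and $\D$ spans it, so the map $\vw\mapsto \max_{n}|\la \vd_{n},\vw\ra|$ is a norm on $\V_{M}$: it is evidently a seminorm, and it vanishes only when $\vw$ is orthogonal to every atom, hence orthogonal to $\V_{M}$, which for $\vw\in\V_{M}$ forces $\vw=0$. Since all norms on a finite dimensional space are equivalent, there is a constant $C$ with $\|\vw\|\le C\max_{n}|\la \vd_{n},\vw\ra|$ for every $\vw\in\V_{M}$; applying this to $\vw^{k}$ yields $\|\vw^{k}\|\to 0$ (equivalently one could argue by compactness of the unit sphere of $\V_{M}$, extracting from any hypothetical non-null subsequence a limit annihilated by all atoms, a contradiction). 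Consequently $\vr^{k}\to \vf-\op_{\V_{M}}\vf$, whence $\vf^{k}=\vf-\vr^{k}\to \op_{\V_{M}}\vf$, which reduces to $\vf$ exactly when $\vf\in\V_{M}$, completing the argument.
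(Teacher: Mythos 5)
Your proof is correct, and its skeleton matches the paper's: both extract monotone decay of $\|\vr^{k}\|$ from the Pythagorean identity \eqref{tech:2}, deduce that the selected inner product $|\la \vd_{\ell_{k+1}},\vr^{k}\ra|$ vanishes, invoke the selection rule \eqref{selMP} to get $\max_n|\la \vd_n,\vr^{k}\ra|\to 0$, and then appeal to finite dimensionality. The difference lies in how the final step is handled, and here your version is genuinely more complete. The paper stops at ``$\lim_{k\to\infty}|\la \vd_n,\vr^{k}\ra|=0$ for all $n$'' and simply asserts that consequently $\vr^{k}$ tends either to zero or to a vector orthogonal to every element of $\D$, remarking that this is readily obtainable in the finite-dimensional framework; it never actually establishes that the vector sequence $\vr^{k}$ converges, only that its norms do. Your proposal supplies exactly the missing argument, in two steps: (i) the component of $\vr^{k}$ orthogonal to $\V_{M}$ is invariant under the iteration, since each update subtracts a multiple of an atom lying in $\V_{M}$, so it equals $\vf-\op_{\V_{M}}\vf$ for every $k$; and (ii) the in-span component $\vw^{k}=\op_{\V_{M}}\vr^{k}$ tends to zero because $\vw\mapsto\max_n|\la \vd_n,\vw\ra|$ is a genuine norm on the finite-dimensional space $\V_{M}$ (the atoms span it), hence equivalent to the Euclidean norm there. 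This identifies the limit rigorously rather than implicitly. Your telescoping-sum variant of the decay argument is also marginally stronger than the paper's limit computation, yielding summability of the selected inner products rather than mere convergence to zero, though that extra strength is not needed for the conclusion.
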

This lemma is just a particular case of the
 well established and more general convergence  results  for  MP \cite{Jon87,MZ93,VNT99}.
However, for pedagogical reasons, due to 
its crucial importance for this work, 
 we present here a particular proof
holding {\em{only}} for finite dimension spaces which,
for this reason, is very simple.
\begin{proof}
We notice, from \eqref{tech:2}, that $\|\vr^{k}\|^2$ 
is a decreasing
sequence  which, since  $\|\vr^{k}\|^2\ge 0$ for all $k$, 
is bounded.
It is a classic result of analysis that a 
decreasing and bounded sequence 
converges to the infimum \cite{BS99},  i.e.,
$\lim_{k\to \infty} \|\vr^{k}\|^{2}= b$.
We prove next that $b=0$.
Since
$$\|\vr^{k+1}\|^{2}= \|\vr^{k}\|^{2} - |\la \vd_{\ell_{k+1}}, \vr^{k} \ra|^{2},$$  taking $\lim_{k \to \infty}$ of both sizes, we have:
$$b^2=b^2 - \lim_{k \to \infty}  |\la \vd_{\ell_{k+1}}, \vr^{k} \ra|^{2}.$$
Thus, $\lim_{k \to \infty}  |\la \vd_{\ell_{k+1}}, 
\vr^{k} \ra|=0,$ which using \eqref{selMP} 
implies $\lim_{k \to \infty} |\la \vd_{n},  \vr^{k} \ra|=0, 
\, n=1,\ldots,M.$
Consequently, 
either $\lim_{k \to \infty} \vr^{k}=0$ or, if the 
dictionary is incomplete, 
$\lim_{k \to \infty} \vr^{k} $ is orthogonal to all
the elements in $\D$.
This result is readily obtainable here, because of 
the finite dimension framework. 
Indeed, in finite dimension 
 the existence of a  
reciprocal (dual) dictionary 
$\Dt=\{\vdt_{n} \in \R^N\}_{n=1}^M$  spanning the 
same space as $\D$  
 is guaranteed \cite{LRN06,CKP12}. Hence, even if due to the 
redundancy of $\D$ the 
decomposition is 
not unique, all $\vg  \in \V_M=\Spann(\D)=
\Spann(\Dt)$ can be decomposed in the form
$$
\vg=\sum_{n=1}^M \vd_{n} \la \vdt_{n}, \vg \ra= \sum_{n=1}^M \vdt_{n} \la \vd_{n}, \vg \ra.
$$
Furthermore, every vector in 
$\R^N$, and in particular $\vr^{k}$, can be 
split as $\vr^{k}= \op_{\V_M} \vr^{k} + 
\op_{\V_M^\perp} \vr^{k}$, 
where $\op_{\V_M} \vr^{k}$ is the orthogonal projection 
onto $\V_M$ and  
 $\op_{\V_M^\perp} \vr^{k}$ is the orthogonal projection
 onto the subspace $\V_M^\perp$, which is 
 the orthogonal complement of  $\V_M$ 
  in $\R^N$.
From the relation
$$\op_{\V_M} \vr^{k}=\sum_{n=1}^M \vd_{n} \la \vdt_{n}, \vr^{k} \ra= \sum_{n=1}^M \vdt_{n} \la \vd_{n}, \vr^{k}\ra,$$
and because it involves a {\em{finite}} sum, 
 we conclude that  $\lim_{k \to \infty}  
|\la \vd_{n}, 
\vr^{k} \ra|=0,\,n=1,\ldots,M  
\implies\,
\lim_{k \to \infty} \op_{\V_M}  \vr^{k}=0$. Then,
either 
$ \lim_{k \to \infty} \vr^{k}=0$ or
$\lim_{k \to \infty} \vr^{k} \in {\V_M^\perp}$. Consequently, since 
$\vf^k= \vf - \vr^{k} \in \V_M$, it follows that 
 $\lim_{k \to \infty} \vf^k=  \op_{\V_M} \vf$.
\end{proof}
\subsection{Adding Self Projections}
\label{asp}
The obvious way of improving the MP algorithm is to calculate the coefficients in \eqref{atoq} so as to 
minimize the norm of the 
 residual error $\|\vf -\vf^k\|$  
 for every value of $k$.  In other words, to require that, 
at each 
iteration, the coefficients in \eqref{atoq} should fulfill the 
condition $\vf^k = \op_{\V_k} \vf$, where 
${\V_k}=\Spann\{\vd_{\ell_j}\}_{j=1}^k$. Hence the name, OMP,  
 of the approach achieving this.
  When the dimension of the problem 
is such that memory requirement is not an issue, a 
number of convenient direct linear algebra methods 
for performing  
the projection $\op_{\V_k} \vf$ are
available \cite{Bjo96,GV96,Hig02}. However, 
  it is the need of 
 calculating  orthogonal projections with much less 
storage demands than direct methods what originated 
the SPMP approach described below.

SPMP relays on Lemma \ref{lem} to realize the 
orthogonal projection step and produces an
alternative iterative implementation of the OMP approach.
Given a signal $\vf$, a tolerance error $\rho$ for the approximation, and a dictionary $\D$, the SPMP algorithm  
proceeds as follows \cite{RNB13}: 
Set $\Ga_0=\{\emptyset\}$, $\vf^0=0$ and $\vr^0=\vf$. 
Starting from $k=0$, 
at each iteration implement the steps below.
\begin{itemize}
\item[i)]
 While $\|\vr^k\| > \rho$ increment $k \leftarrow k+1$ and apply 
the  MP criterion for selecting from $\D$ the 
 atom $\vd_{\ell_{k}}$ to be placed in the atomic decomposition i.e., select $\ell_{k}$ such that
\be
\label{selMPal}
\ell_{k}=  \operatorname*{arg\,max}_{\substack{n=1,\ldots,M}} |\la \vd_{n} , \vr^{k-1} \ra|. 
\ee
Update the set 
 $\Ga_k = \Ga_{k-1} \cup \{\ell_{k}\}$.
Compute $c(k)=\la \vd_{\ell_{k}} , \vr^{k-1}\ra$, 
update the approximation of $\vf$ as 
 $\vf^k = \vf^{k-1} + 
 c(k) \vd_{\ell_{k}}$, 
and evaluate the new residual $\vr^k= \vf -\vf^k$.
\item[ii)]
Realize the orthogonal projection 
by subtracting from $\vr^k$ the component in 
$\V_k= \Spann\{\vd_{\ell_{i}}\}_{i=1}^k$, via the MP algorithm, as follows. 
Let $\eps$ be a given tolerance for the projection 
error. 
  Set $j=1$, $\vr^{k,0}=\vr^{k}$ and at iteration 
$j$ implement the 
steps below:
\begin{itemize}
\item [(a)]
Choose,   
out of the set $\Ga_k$, the index $l_j$ such that
$$l_j= \operatorname*{arg\,max}_{\substack{i=1,\ldots,k}} \left|\la \vd_{\ell_{i}}, \vr^{k,j-1} \ra\right|.$$
If $\left|\la \vd_{l_j}, \vr^{k,j-1} \ra\right|< \eps$ 
set $\vr^k \leftarrow \vr^{k,j-1}$ and return to i).
 Otherwise continue with steps (b) and (c) as follows. 
\item [(b)]
Use $\la \vd_{l_j} , \vr^{k,j-1} \ra$  to 
update the coefficient $c(l_j)$, the approximation 
$\vf^k$, and the residual,  as
\ben
c(l_j)  &\leftarrow & c(l_j)+ \la \vd_{l_j}, \vr^{k,j-1} \ra, \nonumber \\
\vf^k &\leftarrow & \vf^k + \la \vd_{l_j}, \vr^{k,j-1}\ra \vd_{l_j},\nonumber\\
\vr^{k,j}&=& \vr^{k,j-1} -  \la \vd_{l_j}, \vr^{k,j-1} \ra \vd_{l_j}. \nonumber
\een
\item [(c)]
Increment $j \leftarrow j+1$ and repeat steps 
(a) $\to$ (c) until the stopping criterion is met. 
\end{itemize}
\end{itemize}
As proved in Lemma~\ref{lem}, 
 by means of the self-projections implemented 
by steps (a) -- (c), 
at each iteration $k$ the SPMP algorithm 
asymptotically delivers 
an approximation $\vf^k=\op_{\V_k} \vf$  
with residual $\vr^k= \vf - \op_{\V_k} \vf$. 
The next Lemma stresses the fact that, as a consequence, 
 the SPMP algorithm 
selects only linearly independent atoms.
\begin{lemma}
\label{lem2}
If the atoms $\vd_{\ell_i},\,i=1,\ldots,k$
are selected by criterion \eqref{selMPal}, and the
residual $\vr^k$ is refined by self projections at each
iteration, the selected atoms
 constitutes a linearly independent set.
\end{lemma}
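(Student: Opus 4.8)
The plan is to argue by induction on $k$, exploiting the fact that once the self-projection of step ii) has been carried out, the residual $\vr^k$ is orthogonal to the subspace $\V_k=\Spann\{\vd_{\ell_i}\}_{i=1}^k$ spanned by the atoms selected so far. Indeed, steps (a)--(c) are precisely the MP recursion run on the reduced dictionary $\{\vd_{\ell_i}\}_{i=1}^k$, so Lemma~\ref{lem} applied with $\V_M$ replaced by $\V_k$ guarantees that the refined residual converges to $\vf-\op_{\V_k}\vf$, which by the defining property of the orthogonal projection satisfies $\la \vd_{\ell_i},\vr^k\ra=0$ for all $i=1,\ldots,k$. This orthogonality is the engine of the whole argument.

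The base case $k=1$ is immediate, since a single normalized atom $\vd_{\ell_1}$ is a nonzero vector and hence trivially linearly independent. For the inductive step I would assume that $\{\vd_{\ell_i}\}_{i=1}^k$ is linearly independent, so that $\dim \V_k = k$. At iteration $k+1$ the criterion \eqref{selMPal} selects $\ell_{k+1}$ maximizing $|\la \vd_n,\vr^k\ra|$ over the whole dictionary, and the algorithm only proceeds to augment the representation when the resulting coefficient $c(k+1)=\la \vd_{\ell_{k+1}},\vr^k\ra$ is nonzero. Were it zero, the maximality in \eqref{selMPal} would force $\la \vd_n,\vr^k\ra=0$ for every $n$, meaning $\vr^k\perp\V_M$ and that $\op_{\V_k}\vf$ already equals $\op_{\V_M}\vf$, so that no further atom could reduce the error and the recursion would have terminated.

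The key step is then to observe that $\vd_{\ell_{k+1}}\notin\V_k$, which I would establish by contradiction: if $\vd_{\ell_{k+1}}\in\V_k$, then, since $\vr^k\perp\V_k$ by the orthogonality noted above, we would have $\la \vd_{\ell_{k+1}},\vr^k\ra=0$, contradicting $c(k+1)\neq 0$. Adjoining to a linearly independent set a vector lying outside its span preserves linear independence, so $\{\vd_{\ell_i}\}_{i=1}^{k+1}$ is linearly independent, and the induction delivers the claim for every $k$.

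The hard part will not be the linear algebra, which is elementary, but pinning down the orthogonality $\vr^k\perp\V_k$ cleanly: strictly speaking the self-projection is an iterative process that attains $\vf-\op_{\V_k}\vf$ only asymptotically, or up to the tolerance $\eps$ of step (a). The clean statement therefore presumes that step ii) is carried to the exact orthogonal projection, as licensed by Lemma~\ref{lem}; with that interpretation the remaining reasoning is straightforward.
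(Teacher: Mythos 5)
Your proposal is correct and follows essentially the same route as the paper: induction on $k$, using the fact that the self-projection step (via Lemma~\ref{lem}) makes $\vr^k$ orthogonal to $\V_k$, and then deriving a contradiction between $\vd_{\ell_{k+1}}\in\V_k$ and $\la \vd_{\ell_{k+1}},\vr^k\ra\neq 0$. Your explicit justification that the selected coefficient is nonzero (else the algorithm would have terminated), and your remark that the orthogonality holds only in the asymptotic/exact-projection sense, are points the paper leaves implicit, but the core argument is identical.
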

\begin{proof}
For $k=1$  the
lemma is triviality true. Assuming that it is true
 for the first $k$ atoms we prove that it is
true for $k+1$ atoms.

Suppose, on the contrary,
that $\left|\la \vd_{\ell_{k+1}}, \vr^k \ra \right| >0$ and
$\vd_{\ell_{k+1}}= \sum_{i=1}^k a_i \vd_{\ell_i}$,
where $a_i,\, i=1,\ldots,k$ are
 numbers such that $\sum_{i=1}^k |a_i|^2 >0$.
Since at the iteration $k$ the SPMP algorithm asymptotically 
gives 
 a residual that satisfies $\vr^k= \vf - \op_{\V_k} \vf$ 
we have:
$$\la \vd_{\ell_{k+1}}, \vr^k \ra= 
\la \sum_{i=1}^k a_i \vd_{\ell_i}, \vf - \op_{\V_k} \vf\ra=0,$$
which contradicts the assumption that 
$\left|\la \vd_{\ell_{k+1}}, \vr^k \ra \right|>0$. 
It is concluded 
then that $\vd_{\ell_{k+1}}$ cannot be expressed 
as a linear combination of the previously 
selected atoms. 
\end{proof}

\subsection{Convergence rate of the self projection
steps}
\label{cr}
We start by recalling some properties of symmetric 
matrices, which will be used for the analysis.
Let the atoms $\vd_{\ell_i},\,i=1,\ldots,k$
be the columns of the matrix $\vS_{k}$. 
Since the atoms are linearly independent, the 
symmetric 
 matrix $\vHk=\vS_k \vS_k^\topc$ has $k$ nonzero 
eigenvalues, which are also the $k$ eigenvalues of the 
Gram matrix $\vG_k= \vS_k^\topc \vS_k$. In
 terms of the corresponding 
 eigenvectors $\vHk$ can be expressed
as
\be
\label{gg}
\vHk=\vUk \vLk \vUk^\topc, 
\ee
where $\vLk$ is a diagonal matrix, containing
in the diagonal its
eigenvalues $\lambda^{k}_i>0,\,i=1,\ldots,k$
in descending order. Since all the atoms are
normalized, it holds that
$$\text{Trace}(\vHk)= \sum_{i=1}^k \lambda^k_i =k.$$
This relation 
implies that $ k\lambda^{k}_k \le k \le  k  \lambda^{k}_1$,
which ensures that $\lambda^{k}_k \le 1$. 
The columns of matrix $\vUk$  are the 
normalized eigenvectors of $\vHk$ corresponding to the 
eigenvalues $\lambda^{k}_i>0,\,i=1,\ldots,k$. Since 
$\vHk$ is symmetric these eigenvectors constitute an 
 orthonormal basis for $\V_{k}=\Range(\vS_k)$. 
 Accordingly,
the orthogonal projector $\op_{\V_{k}}$
admits a representation of the form:
\be
\label{pp}
\op_{\V_{k}} = \vUk \vUk^\topc.
\ee
Then, the following inequality 
 arises from \eqref{gg} and \eqref{pp}, 
\be
\label{lowb}
\|\vS_k^\topc \vg \|^2 =  
\la \vg, \vS_k \vS_k^\topc\vg \ra 
\ge \lambda^{k}_{k} \|\op_{\V_{k}} \vg\|^2,\quad \forall\, \vg \in \R^N.
\ee
This inequality will be used for the analysis of 
the convergence 
rate of the self-projection step.
\begin{proposition}
At iteration $j$ the component in $\V_{k}$ of the 
residual $\vr^{k,j}$ is bounded as 
\be
\label{crprop}
\|\op_{\V_{k}}\vr^{k,j}\|^2 \le
\left(1-\frac{\lambda^{k}_{k}}{k}\right)^{j}
\|\vr^{k,0}\|^2.
\ee
\end{proposition}
\begin{proof}
Let's recall that the projection step operates by setting 
$\vr^{k,0}=\vr^{k}$ and at the 
$j$-th iteration decomposing 
the residual $\vr^{k,j}$  as 
\be
\vr^{k,j} = {\vr^{k,j-1}} -
\la \vd_{l_j}, \vr^{k,j-1} \ra \vd_{l_j},
\label{techsp:1}
\ee
where 
\be
\label{selMP2}
l_j= \operatorname*{arg\,max}_{\substack{i=1,\ldots,k}} |\la \vd_{\ell_i} , \vr^{k,j-1} \ra|.
\ee
Since $\op_{\V_{k}} \vd_{l_j}= \vd_{l_j}$, applying the operator $\op_{\V_{k}}$ on both sides
of \eqref{techsp:1} we have,
$$\op_{\V_{k}}\vr^{k,j}= 
\op_{\V_{k}}\vr^{k,j-1}-\left\langle \vd_{l_j},\vr^{k,j-1} \right\rangle \vd_{l_j},$$
and consequently  
\begin{equation}
\|\op_{\V_{k}}\vr^{k,j}\|^{2}= 
\|\op_{\V_{k}}\vr^{k,j-1}\|^2-|\left\langle {\vr^{k,j-1}},
\vd_{l_j}\right\rangle|^2.
\end{equation}
By definition of the index $l_j$ 
(cf.\eqref{selMP2}),   
and using \eqref{lowb}, we assert that 
$$|\left\langle \vd_{l_j}, \vr^{k,j-1}\right\rangle|^2
\ge \frac{1}{k} 
\sum_{i=1}^{k}|\la \vd_{i}, \vr^{k,j-1} \ra| =
\frac{1}{k} \| \vS_k^\topc \vr^{k,j-1}\|^2
\ge \frac{\lambda^{k}_k}{k}\|\op_{\S_{k}}\vr^{k,j-1}\|^2.$$
Then, we finally obtain
\be
\label{cr1}
\|\op_{\V_{k}}\vr^{k,j}\|^2 \le 
\left(1-\frac{\lambda^{k}_k}{k}\right)
\|\op_{\V_{k}}\vr^{k,j-1}\|^2,
\ee
and applying the inequality back $j$-times
\be
\label{cr2}
\|\op_{\V_{k}}\vr^{k,j}\|^2 \le
\left(1-\frac{\lambda^{k}_{k}}{k}\right)^{j}
\|\op_{\V_{k}}{\vr^{k,0}}\|^2 \le
\left(1-\frac{\lambda^{k}_{k}}{k}\right)^{j}
\|\vr^{k,0}\|^2.
\ee
\end{proof}
The  bound  \eqref{cr2} 
gives a power form for the  worst-case 
convergence rate to a residual vector having no component in 
$\V_{k}$. It also shows the dependence of the 
 convergence rate on the smallest eigenvalue 
of the Gram matrix $\vGk$ 
of the selected atoms up to iteration $k$. According 
to the interlacing theorem (\!\cite{HJ91}, p 189--190)  
 it is true that  $\lambda^{k+1}_{k+1}<\lambda_k^k$. 
Hence,  
in general one could expect the convergence 
rate of the self projection to slow 
down as the iterative selection of atoms progresses.

{\bf{Remark 1:}} The  
convergence of MP in terms of the dictionary's 
coherence \cite{Tro04} is derived in \cite{GV06} 
for the case of 
quasi incoherent dictionaries. That condition is 
 too stringent for signals
of practical interest, 
 which are far more compressible when using a
 highly coherent dictionary than when using an
orthogonal or quasi orthogonal basis. 
Contrarily, the 
expression \eqref{cr2} gives a realistic appreciation
 with respect to the broad range of effective applicability
of the SPMP approach. Regardless of the dictionary 
coherence, SPMP can be an effective  
low memory implementation of the OMP greedy strategy  
as long as the least squares problem, for 
the determination of the coefficients in the decomposition 
 \eqref{atoq}, is a well posed problem. 
\subsection{Numerical Example I}
\label{NE1}
We illustrate here some features of the 
 numerical convergence of the SPMP method in 
relation to the particular application to sparse 
signal decomposition.  

The quality of the  $k$-term approximation $\vf^k$ of  a
signal $\vf$  is assessed by the
Signal to Noise Ratio (SNR), which is defined as
$$\text{SNR}=10 \log_{10} \frac{\|\vf\|^2}{\|\vf - \vf^k\|^2}.$$
As an example  we approximate, up to SNR = 35 dB, the $N=1024$
samples of a music signal shown on Fig.~\ref{pia1024}.
This  SNR value produces a high quality approximation of 
the signal,  indistinguishable from the original signal 
in the scale of  Fig.~\ref{pia1024}.

\begin{figure}[!ht]
\begin{center}
\includegraphics[width=10cm]{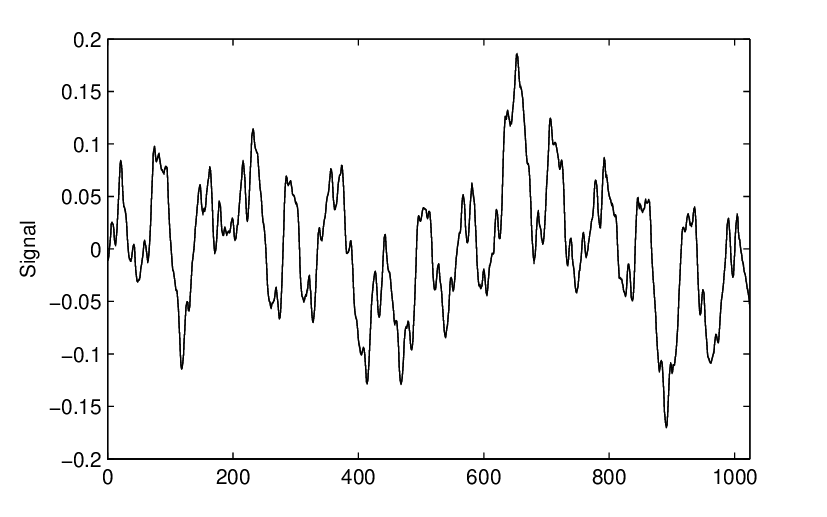}
\end{center}
\caption{$N=1024$ samples of a music signal and its 
approximation.}
\label{pia1024}
\end{figure}

In the first instance we consider a uniformly 
random dictionary with redundancy four, which is 
certainly not an appropriate dictionary for representing music.
Indeed, the SPMP method requires $k=648$ atoms for  approximating the 1024 samples up to SNR = 35 dB.
The left graph in Fig.~\ref{karantri} shows the number 
of iterations spent 
in the orthogonal projection step vs the number of 
atoms involved in the corresponding step. 

\begin{figure}[!ht]
\begin{center}
\includegraphics[width=8.1cm]{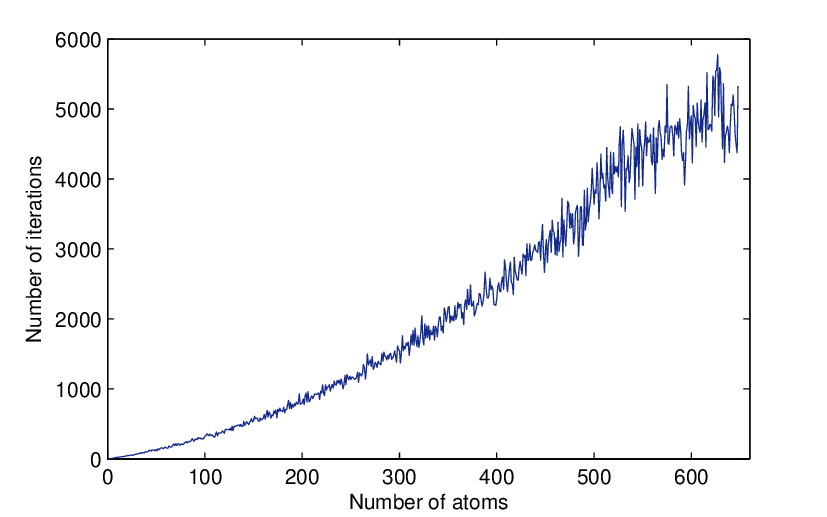}
\includegraphics[width=8.1cm]{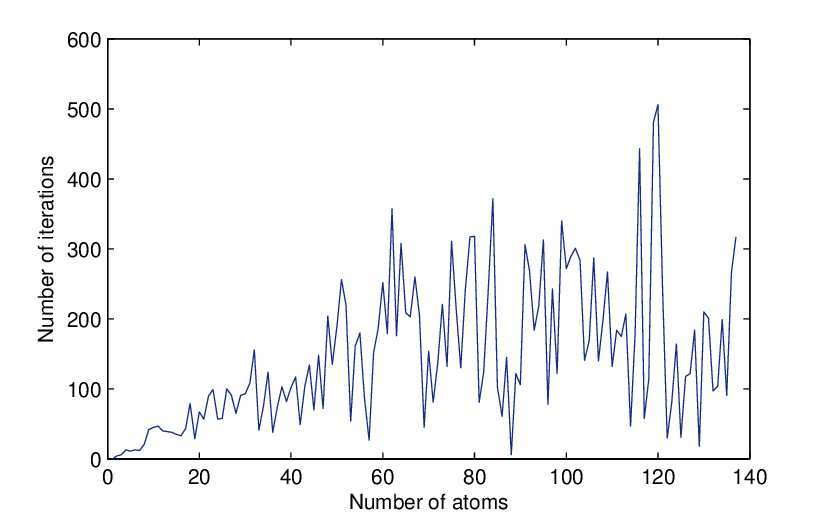}
\end{center}
\caption{The left graph shows the number of iterations 
needed by SPMP for the approximation of the signal in 
Fig.~\ref{pia1024}  using a random 
dictionary.  The right graph has the same description 
as the left graph but using the trigonometric dictionary 
$\D^{cs}$.}
\label{karantri}
\end{figure}

In order to obtain a sparse representation 
of the same signal we now change the 
 random dictionary to the trigonometric
one, $\D^{cs}= \D^c \cup \D^s$,
with $\D^c$ and $\D^s$ as given below
\be
\label{RDC}
\mathcal{D}^{c}=\{{w^c(n)}\,
 \cos ({\frac{{\pi(2i-1)(n-1)}}{2M}}),\,i=1,\ldots,N\}_{n=1}^{M}.
\ee
and
\be
\label{RSC}
\mathcal{D}^{s}=\{{w^s(n)}\, \sin  ({\frac{{\pi(2i-1)n}}{2M}}), \,i=1,\ldots,N\}_{n=1}^{M},
\ee
where  $w^c(n)$ and $w^s(n)$ are normalization factors.
Taking $M=2N$ the dictionary $\D^{cs}$ has the 
same redundancy as the previous one, but is suitable 
for representing music. The SPMP method uses now
 137 atoms for approximating the 
signal in Fig.~\ref{pia1024} up to SNR = 35 dB 
(the same number of atoms the 
OMP method needs). The right graph in 
 Fig.~\ref{karantri} shows
the iterations needed by the orthogonal 
 projection step with dictionary $\D^{cs}$. 
 It is clear that, up to the same numerical precision, 
 the iterations to achieve the orthogonal 
projection depend on the dictionary.
 
Next we keep using the dictionary $\D^{cs}$   
 for tackling the following large dimension
problem: The representation by non-orthogonal frequency
components of the flute tone depicted in Fig.\ref{tone}, 
which consists of $N=61285$ samples.
\begin{figure}[!ht]
\begin{center}
\includegraphics[width=9cm]{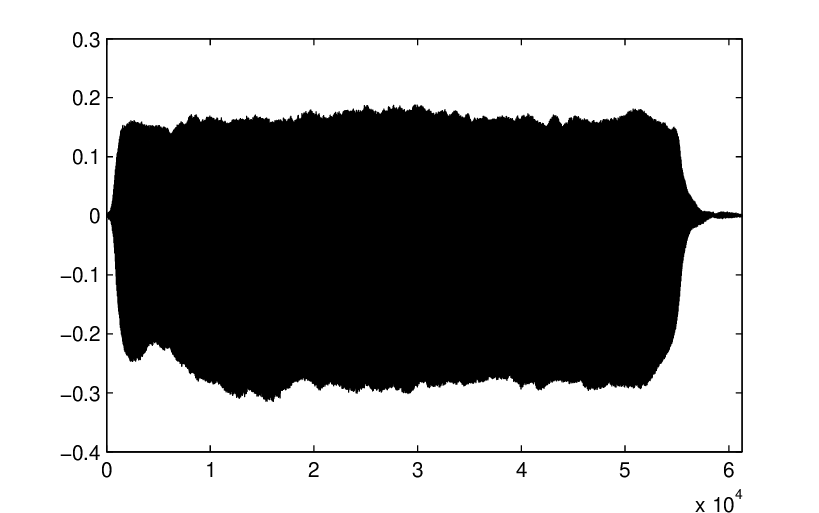}
\end{center}
\caption{Baroque flute tone C$\#5$.
Sound clip Csharp5.baroque.wav
available on  {\tt{https://newt.phys.unsw.edu.au/music/flute/baroque/Csharp5.baroque.html}}}
\label{tone}
\end{figure}
A particularity of dictionary
$\D^{cs}$ is that,
because by padding with zeros
the inner products with its elements
can be computed via the Fast Fourier Transform
 \cite{LRN16,RNA16}, 
there is no need to store
the dictionary as such (otherwise in this example
 it  would be a matrix of
dimension $61285 \times 245140$). 
The left graph of Fig.~\ref{katri2} shows the number 
of iterations vs the number of atoms in the signal 
approximation. The right graph is the histogram 
of the values in the left graph. The mean 
value of the number of iterations in the whole 
approximation is 44. 

\begin{figure}[!ht]
\begin{center}
\includegraphics[width=8.1cm]{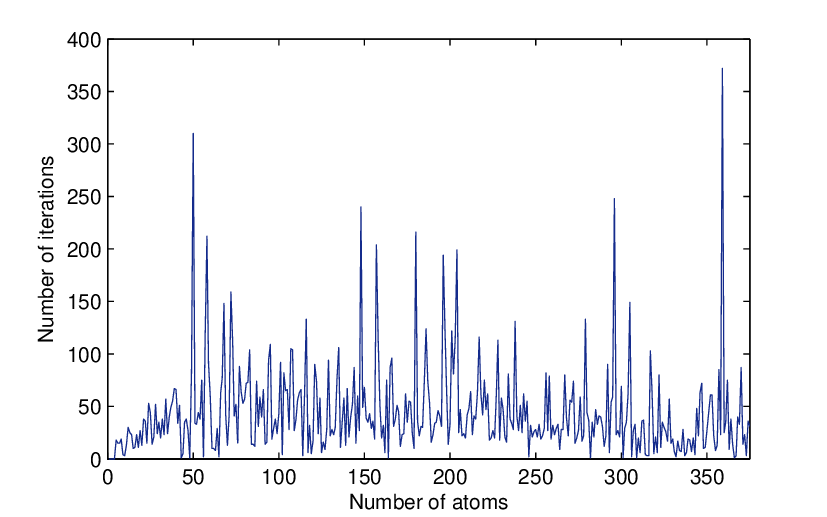}
\includegraphics[width=8.1cm]{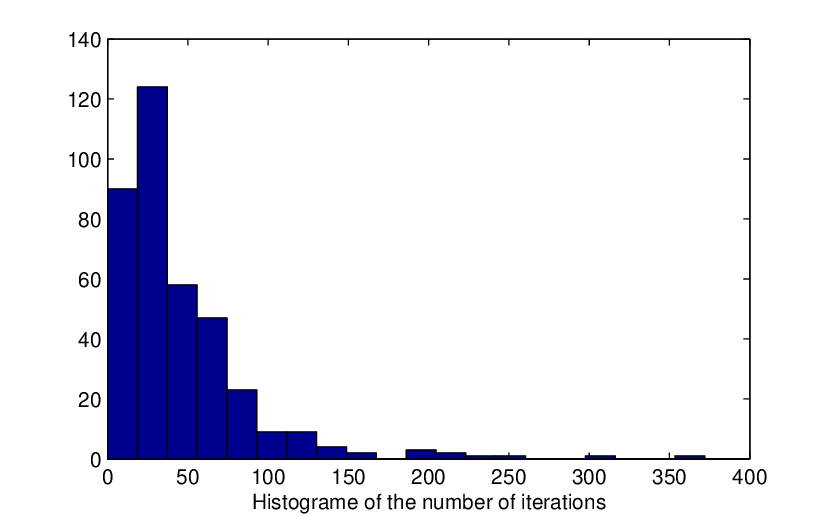}
\end{center}
\caption{The left graph shows the number of iterations
needed by SPMP for the approximation of the flute tone in 
Fig.~\ref{tone}.
 The right graph is the histogram of the values in the 
left graph.}
\label{katri2}
\end{figure}

\subsection{On the accuracy of self projections}

The numerical accuracy of most used 
 direct methods for 
calculating a projection is well studied 
\cite{Bjo96,GV96,Hig02, Wil65, Bjo67, Bjo94, GLE05,GLR05, Ste05} and 
 also the subject of recent research in particular contexts
 \cite{VVl10,Ste11,LBG13}. Contrarily, 
 the numerical analysis of the SPMP algorithm 
has not yet been addressed.  
Therefore,  this section discusses the accuracy of 
the self projection procedure, when implemented in 
finite precision arithmetic. 
 
Since the self projection steps (a) - (c) in Sec.~\ref{asp} 
are 
based on recursive calculation of inner products,  
we base the numerical analysis of the method on 
two basic results. 
As usual the evaluation of an arithmetic operation 
is denoted as $\fl(\cdot)$  and the unit roundoff
as $u$. Thus, for $\vf_1 \in \R^N$ and 
$\vf_2 \in \R^N$ the  numerical error
in the calculation of the inner product
$\la \vf_1, \vf_2 \ra$ is bounded as (\!\cite{GV96}, p. 99)
\begin{equation}
\label{eq:inner}
|\fl(\la \vf_1,\vf_2 \ra)- \la \vf_1,\vf_2 \ra|\leq N u \|\vf_1\|\|\vf_2\| + \O(u^2).
\end{equation}
The computation of the 
saxpy operation $\alpha \vf_1  + \vf_2$, with $\alpha$  a number,  is 
bounded as (\!\cite{GV96}, p. 100)
\be
\label{saxpy}
\|\fl(\alpha  \vf_1 +\vf_2)- (\alpha  \vf_1 +\vf_2) \|\leq 
u  (2\| \alpha\vf_1\| +\|\vf_2\|) + \O(u^2).
\ee
\begin{theorem}
An approximate bound for the error produced by 
 implementing the 
projection step in 
finite precision arithmetics is give as
\be
\label{errbound}
\|\Delta \bar \vr^{k,j}_T \| \lessapprox   (N+3) j u \|\vr^{k,0}\| + \O(u^2).
\ee
\end{theorem}
\begin{proof}
Denoting the computed quantities by $\bar \vr^{k, j}$   and by $ \bar l_j$ the indices selected with the computed quantities, using \eqref{saxpy} we have  
\be
\label{errmodel}
\bar \vr^{k, j}= \bar \vr^{k, j-1} - \fl( \la \bar \vr^{k,j-1}, \vd_{\bar l_j} \ra)   \vd_{\bar l_j}  +  \delta \bar \vr^{k,j},
\ee
with  
$$\| \delta \bar \vr^{k,j} \| \leq  u \left (\| \bar \vr^{k, j-1} \| + 2 | \fl( \la\bar \vr^{k,j-1}, \vd_{\bar l_j}  \ra)| \right ) + 
\O(u^2).
$$
Through straightforward manipulation we further have
\ben
\| \delta \bar \vr^{k,j} \|& \leq & 
u \left (\| \bar \vr^{k, j-1} \|+2 |\fl( \la\bar \vr^{k,j-1}, \vd_{\bar l_j} \ra) - \la\bar \vr^{k,j-1}, \vd_{\bar l_j}  \ra| + 2 |\la\bar \vr^{k,j-1}, \vd_{\bar l_j} 
 \ra|\right) + \O(u^2) \nonumber
\een
so that, using \eqref{eq:inner}, we finally 
obtain
\ben
\label{delr}
\| \delta \bar \vr^{k,j} \| 
& \leq & u \left (3 \| \bar \vr^{k, j-1} \| 
+  2N u  \| \bar \vr^{k, j-1} \| \right ) + \O(u^2)=
3u  \| \bar \vr^{k, j-1} \| + \O(u^2). 
\een
Moreover, \eqref{errmodel} can be rewritten as
\be
\label{model2}
\bar \vr^{k, j}= \bar \vr^{k, j-1} - \la \bar \vr^{k,j-1}, \vd_{\bar l_j} \ra  \vd_{\bar l_j}  +  \Delta \bar \vr^{k,j},
\ee
where
$\Delta \bar \vr^{k,j}= - \fl( \la \bar \vr^{k,j-1}, \vd_{\bar l_j} \ra)   \vd_{\bar l_j} + \la \bar \vr^{k,j-1}, \vd_{\bar l_j} \ra  \vd_{\bar l_j} +  \delta \bar \vr^{k,j}.$ Using 
now \eqref{delr} and \eqref{eq:inner} we have  the  
bound for the norm of $\Delta \bar \vr^{k,j}$ in the 
form
\ben
\label{Delr}
\|\Delta \bar \vr^{k,j}\| \le 
 N u \| \bar \vr^{k, j-1} \| +  3u \| \bar \vr^{k, j-1} \| + \O(u^2)=  u(N + 3)  \| \bar \vr^{k, j-1} \| + \O(u^2).
\een
Thus, due to  rounding errors instead of the theoretical 
result $\|\vr^{k,j}\| \le \| \vr^{k,j-1}\|$ we 
only have
$$\| \bar \vr^{k,j}\| \le (1+(N+3)u)\| \bar \vr^{k,j-1}\|  + \O(u^2)\le 
(1+(N+3)u)^{j} \| \bar \vr^{k,0}\| + \O(u^2).$$
This inequality gives rise to the recurrence for 
bounding the total error in the calculation of $\vr^{k,j}$. 
 In terms of the matrices 
$\bar\vT_i= (\vI - \vd_{\bar l_i} \vd_{\bar l_i}^\topc),\, 
i=1,\ldots,j$, 
where $\vI\in \R^{N \times N}$ is the identity matrix, 
equation \eqref{model2} can be expressed in the form
\be
\bar \vr^{k,j}= \bar\vT_j \bar\vT_{j-1} \cdots \bar \vT_1 \vr^{k,0} + \Delta\bar \vr^{k,j}_T, 
                   \nonumber 
\ee
where $\Delta \bar \vr^{k,j}_T=
 \sum_{i=1}^j    \bar\vT_{j} \dots \bar\vT_{i+1}   \Delta \bar \vr^{k,i}$   (with the notation $ \bar\vT_{j}\bar\vT_{j+1}=\vI$). Since $\| \bar\vT_{i} \| = 1$  for all  $i=1,\ldots,j$  and   $\Delta \bar \vr^{k,i}$    is
bounded as in \eqref{Delr},   it follows that
 $\Delta \bar \vr^{k,j}_T$ is
bounded as
\ben  
 \|\Delta \bar \vr^{k,j}_T\| &\le &
\sum_{i=1}^j \|\Delta \bar \vr^{k,i}\| \le 
u (N+3) \sum_{i=1}^j 
(1+(N+3)u)^{i} \|\vr^{k,0}\| + \O(u^2).
\een
Restricting considerations to 
$N u \ll 1$  we have the approximate bound
\be
\label{finalbound}
\|\Delta \bar \vr^{k,j}_T \| \lessapprox   (N+3) j u \|\vr^{k,0}\| + \O(u^2).
\ee
\end{proof}
Even  if, 
as discussed in Sec.~\ref{cr}, 
 in the limit $j \to \infty$  the 
convergence  $\vr^{k,j} \to  \vf -\op_{\V_k} \vf$ 
is theoretically guaranteed, the size  of $\Delta \bar  \vr^{k,j}_T$ gives a  
limit for the maximum number of recursive operations. 
  Beyond that 
limit the calculations in the self projection algorithm 
are dominated by rounding errors. However, 
in situations of practical interest the numerical 
convergence is fast enough 
for the algorithm  to operate within 
the boundary  of reliability  
 established in \eqref{finalbound}.
\section{Hierarchized Block Wise SPMP} 
\label{HBWSPMP}
The Hierarchized Block Wise (HBW) version of 
pursuit strategies is an
implementation of those techniques dedicated to 
approximating by partitioning. The method 
approximates each element of a signal partition 
independently of each other, but 
links the approximations by a 
global constraint on sparsity \cite{RNMB13,LRN16}. The 
 strategy proceeds simply by ranking the partition 
units for their sequential stepwise approximation. 
This section  discusses the 
HWB version of the OMP approach (HBW-OMP) 
\cite{RNMB13, LRN16} but  
implemented via the SPMP method (HBW-SPMP).

Let's suppose that a given  
signal $\vf$ is split 
into $Q$ disjoint `blocks' $\vf_q,\,q=1,\ldots,Q$, where 
each $\vf_q$ is an element of $\R^{\Nb}$, with $\Nb=N/Q$. 
Denoting by
$\oj$ the concatenation operator, the 
signal $\vf \in \R^N$ is `assembled' from the blocks as 
$\vf=\oj_{q=1}^Q \vf_q$. This operation implies that 
the  first $N_1$ components of  the vector $\vf$ are given 
by the vector $\vf_1$, the next $N_2$ components by the 
vector $\vf_2$ and so on. The HBW version of SPMP 
for approximating the signal's partition using $K$ atoms 
in total is implemented by the following steps. 
\begin{itemize}
\item[1)] For $q=1,\ldots,Q$ set
$\vr_q^0=\vf_q$, $ \vf_q^0=0$, $\Ga_0^q=\{\emptyset\}$ 
and $\kq=1$.  
Initialize the algorithm by
selecting the `potential' first atom for
 the atomic decomposition of every
block $q$, according to the MP criterion: 
$$\ell_{\kq}^q=
\operatorname*{arg\,max}_{n=1,\ldots,M} \left | \la \vd_n, 
\vr_q^{\kq-1} \ra \right|,\, q=1,\ldots,Q.$$
\item[2)] Select the block $\qs$ such that
$$\qs = \operatorname*{arg\,max}_{q=1,\ldots,Q} \left| 
\la \vd_{\ell^q_{\kq}}, \vr_q^{\kq-1} \ra\right|.$$
Update the set   
$\Ga^{\qs}_{\kqs}=\Ga^{\qs}_{\kqs-1} \cup \{\ell^{\qs}_{\kqs}\}$ 
and the atomic decomposition of the block $\qs$
 by incorporating the atom $\vd_{\ell^{\qs}_{\kqs}}$ 
 i.e., use $c^{\qs}\!(\kqs)=\la  \vd_{\ell^{\qs}_{\kqs}}, 
\vr_{\qs}^{\kqs-1}\ra$ to compute  
\ben
\vf_{\qs}^{\kqs}&=&\vf_{\qs}^{\kqs-1} + c^{\qs}\!(\kqs)\vd_{\ell^{\qs}_{\kqs}},
\nonumber\\
\vr_{\qs}^{\kqs} &=& \vf_{\qs} - \vf_{\qs}^{\kqs}.\nonumber
\een
If $\kqs>1$ set $\vr_{\qs}^{k,0}=\vr_{\qs}^{k}$ and 
starting from $j=1$ 
  realize the projection   
 as indicated below.
\begin{itemize}
\item [(a)]
Choose,
out of the set 
$\Ga^{\qs}_{\kqs}=\{\ell^{\qs}_i\}_{i=1}^{\kqs}$,
the index $l_j$ such that
$$l_j= \operatorname*{arg\,max}_{\substack{i=1,\ldots,\kqs}} \left|\la \vd_{\ell^{\qs}_i}, \vr^{\kqs,j-1} \ra\right|.$$
If $\left|\la \vd_{l_j}, \vr^{\kqs,j-1} \ra\right|< \eps$ 
jump to  3). Otherwise proceed with steps b) and c).
\item [(b)]
Use $\la \vd_{l_j} , \vr^{\kqs,j-1} \ra$  to
update the coefficient $c^{\qs}\!(l)$, 
the approximation
$\vf^{\kqs}_{\qs}$, and the residual  as
\ben
c^{\qs}\!(l_j)  &\leftarrow & c^{\qs}(l_j)+ \la \vd_{l_j}, \vr^{\kqs,j-1} \ra, \nonumber \\
\vf^{\kqs} &\leftarrow & \vf^{\kqs} +  \la \vd_{l_j} , \vr^{\kqs,j-1} \ra  \vd_{l_j},\nonumber\\
\vr^{\kqs,j}&=&\vr^{\kqs,j-1} - \la \vd_{l_j} , \vr^{\kqs,j-1} \ra  \vd_{l_j}. \nonumber 
\een
\item [(c)]
 Increment $j \leftarrow j+1$ and repeat steps
(a) $\to$ (c) until the stopping criterion is met.
\end{itemize}
\item[3)] Check if for the given number 
$K$ the stopping condition $\sum_{q=1}^Q \kq=K$ has been met.
Otherwise:
 \begin{itemize} 
 \item [$\bullet$]
 Increase $\kqs \leftarrow \kqs +1$.  
 \item [$\bullet$]
 Select a new potential atom for the
 atomic decomposition of block $\qs$
$$\ell_{\kqs}^{\qs}=
\operatorname*{arg\,max}_{n=1,\ldots,M} \left | \la \vd_n,
\vr_{\qs}^{\kqs-1} \ra \right|.$$
 \item [$\bullet$]
 Repeat 2) and 3).
\end{itemize}
\end{itemize}
\subsection{Numerical Example II}
\label{RWS}
We construct here the atomic decomposition of the Pop Piano 
and Classic Guitar clips shown in 
Fig.~\ref{clips}.  
Both clips consists of $N= 262144$ samples 
at 44100Hz each (5.94 secs length).  For the approximation
 we use the trigonometric
dictionary $\D^{cs}$ introduced in Sec.~\ref{NE1}.

\begin{figure}[!ht]
\begin{center}
\includegraphics[width=13cm]{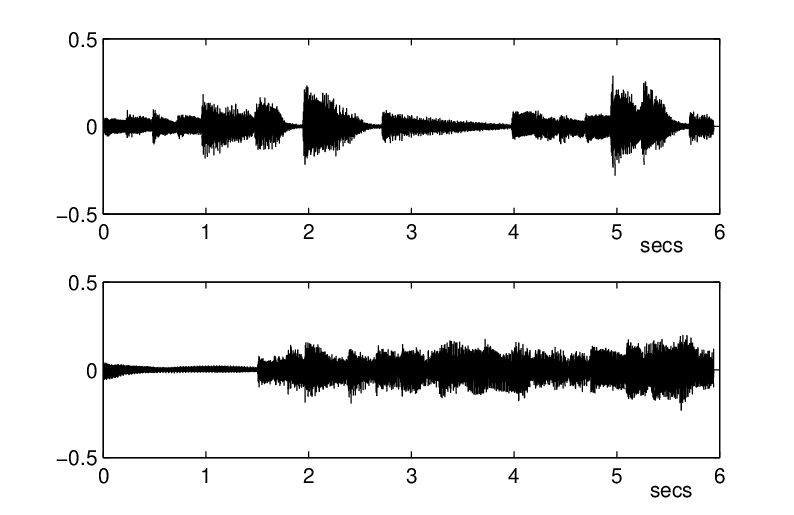}
\end{center}
\caption{\small{Pop Piano (top graph)
and Classic Guitar music signals.
Both clips consist of $N= 262144$ samples
at 44100Hz each (5.94 secs length).}}
\label{clips}
\end{figure}

The global sparsity of the signal approximation
is measured by the Sparsity Ratio
(SR) which is defined as
$\ds{\text{SR}= \frac{N}{K}}$, where $K$ is  
  the total number of coefficients in the signal 
representation. Hence,
the larger the value of SR is the smaller the number of
frequency components needed for the approximation.

The sparsity results of the clips in Fig.~\ref{clips} 
are shown in Fig.~\ref{SR1}, for the 
MP, HBW-MP, SPMP, HBW-SPMP approaches 
 and partitions of unit size $\Nb$ 
equal to 1024, 2048, 4096, and 8192 samples. 
For larger values of $\Nb$
the sparsity does not improve significantly. The quality of 
the approximation is fixed to yield a SNR of 35dB.
 As observed in 
Fig.~\ref{SR1} for the two clips in Fig.~\ref{clips} 
the gain in sparsity achieved by implementing the SPMP 
approach in the HBW manner is significant.

\begin{figure}[!ht]
\begin{center}
\includegraphics[width=8cm]{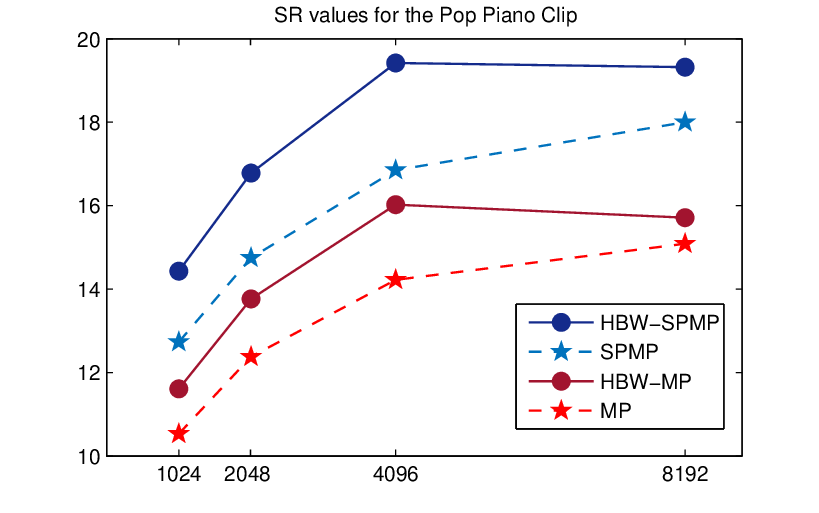}
\includegraphics[width=8cm]{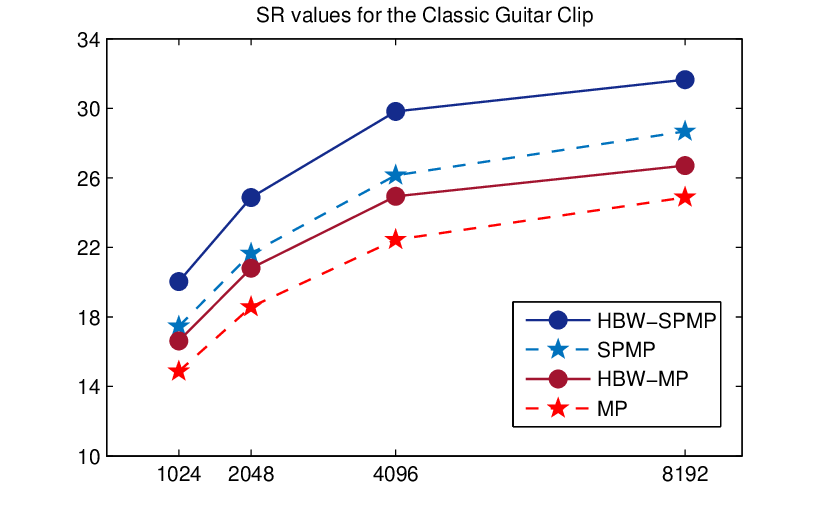}
\end{center}
\caption{\small{SR vs partition unity size $\Nb=
1024, 2048, 4096$ and 8192 samples for the music clips of 
Fig.~\ref{clips}}. The graph on left corresponds to the 
Pop Piano and the other to the Classic Guitar.}
\label{SR1}
\end{figure}

{\bf{Note:}} The MATLAB function HBW-SPMP 
dedicated to reproducing the above example with 
the trigonometric dictionary $\D^{cs}$,
via the FFT, 
has been have been made available on \cite{webpage}.
The MATLAB and C++ codes for implementing SPMP 
 with general dictionaries, as well as the 
corresponding SPMP2D versions for separable dictionaries 
are available on \cite{webpage1}. The MATLAB and C++ codes 
 for SPMP3D can be found on \cite{webpage2}.

\section{Conclusions}
\label{Con}
The  convergence rate of the SPMP 
algorithm, which implements the OMP  greedy strategy 
 by means of the MP one, was derived.  
The orthogonal projection step, 
intrinsic to the OMP method, is realized within the 
SPMP framework by  subtraction from the 
residual error its approximation using the MP 
algorithm with a dictionary consisting only of the 
already selected atoms, up to the particular step. Thus,  
the memory requirements are kept within 
the same scale as for MP. The bound for the 
self projection convergence 
rate (c.f. \eqref{cr2}) clearly highlights
 the broad range of cases for which 
the OMP greedy strategy can be implemented 
through the  SPMP method.  The cases for 
which the convergence could become very slow fall within the 
 class of ill posed problems. 

The analysis of the accuracy of the 
 projection step, when implemented in finite precision 
arithmetics, produced a meaningful upper bound 
 relating the number of 
iterations with the dimension of system and the 
unit roundoff.  This worst-case behavior bound confirms that the SPMP method 
is suitable to be applied to solve 
 well posed problems for which the convergence
is fast. Otherwise, as the number of iterations 
increases the accuracy of the approach would be 
dominated by roundoff errors. Nevertheless, a number 
of applications to real  world 
signals \cite{RNB13, RNA16, LRN17, RNW19} 
have already confirmed that 
the approach is of assistance 
for practical implementations of the OMP greedy strategy 
in situations where, due to memory requirements, 
direct linear algebra techniques cannot be 
applied.

The HBW extension of a pursuit strategy for 
approximating a signal partition was 
considered in relation to the SPMP implementation 
for reduction in memory requirements.   
The suitability of the technique was highlighted by
numerical tests which, due to memory limitations,  
 could not have been realized in a standard computer 
by other implementations of OMP.

\subsection*{Acknowledgements}
We are grateful to anonymous Reviewers for the careful 
reading of the paper and their helpful comments 
 and constructive remarks. 
\bibliographystyle{elsart-num}
\bibliography{revbib}

\begin{thebibliography}{10}
%l1
%Gene H. Golub, Charles F. Van Loan, Matrix Computations, 
%The John Hopkins University Press, Baltimore, 1996. 
\bibitem{CDS01}
S. S. Chen, D. L. Donoho, and M. A Saunders,
``Atomic Decomposition by Basis Pursuit'',
{\em{SIAM Journal on Scientific Computing}}, {\bf{20}},
33--61 (1998).

\bibitem{DT05}
D. L. Donoho and J. Tanner
``Sparse nonnegative solution of underdetermined
linear equations by linear programming'',
{\em{Proceedings of the National Academy of Sciences of
 the United States of America,}} {\bf{102,}} 9446--9451 (2005).
\bibitem{Eld10}
M. Elad, Sparse and Redundant Representations:
From Theory to Applications in Signal and Image Processing,
Springer (2010).

%greedy
\bibitem{MZ93}
S. G. Mallat and Z. Zhang,
``Matching Pursuits with Time-Frequency Dictionaries'',
{\em{IEEE Trans.  Signal Process.,}}
{\bf{41}}, 3397--3415 (1993).

\bibitem{PRK93}
Y.C. Pati, R.~Rezaiifar, and P.S. Krishnaprasad,
\newblock ``Orthogonal matching pursuit: recursive function
approximation with applications to wavelet decomposition,''
\newblock {\em  {Conference Record of the 27th
Asilomar Conference on Signals, Systems and Computers}}
 ,{\bf{1}}, 40--44 (1993).

\bibitem{Nat95} B. K. Natarajan,
``Sparse Approximate Solutions to Linear Systems'',
{\em{SIAM Journal on Computing}}, {\bf{24}},
 227--234 (1995).

\bibitem{DT96}
R.A. DeVore, and V.N. Temlyakov,``Some remarks on greedy algorithms", {\em{Advances in Computational Mathematics}}, {\bf{5}}, 173--187(1996).


\bibitem{VNT99}
V. N. Temlyakov, ``Greedy algorithms and m-term approximation with regard to redundant dictionaries,” {\em{J. Approx. Theory}}, vol. 98, pp.117–145(1999).

%\bibitem{Gri01} R. Gribonval,
%``Fast Matching Pursuit with a multiscale dictionary
%of Gaussian Chirps'', {\em{IEEE Trans. Signal Process.,}}
% {\bf{49}}, 994--1001 (2001).
\bibitem{RNL02}
L.~Rebollo-Neira and D.~Lowe,
``Optimized orthogonal matching pursuit approach'',
{\em{IEEE Signal Process. Letters}}, {\bf{9}},
137--140 (2002).

\bibitem{ARNS04}
M.~Andrle, L.~Rebollo-Neira, and E.~Sagianos, ``Backward-optimized orthogonal
  matching pursuit approach'', {\em{IEEE Signal Proc. Let.}},{\bf{11}},705--708 (2004).

\bibitem{ARN06}
M.~Andrle and L.~Rebollo-Neira,
``A swapping-based refinement of
orthogonal matching pursuit strategies'',
{\em{Signal Processing}}, {\bf{86}}, 480--495 (2006).

\bibitem{Tro04} J. A. Tropp,
``Greed is good:
 algorithmic results for sparse approximation'',
{\em{IEEE Transactions on Information Theory}},
{\bf{50}}, 2231--2242 (2004).

\bibitem{GV06}
R. Gribonval and  P. Vandergheynst,
``On the exponential convergence of matching pursuits
in quasi-incoherent dictionaries'',
{\em{IEEE Transactions on Information Theory}},
255--261 (2006).

\bibitem{DTD06}
D. L. Donoho , Y. Tsaig , I. Drori , and J. Starck,
``Stagewise Orthogonal Matching Pursuit'',
{\em{IEEE Transactions on Information Theory}}, {\bf{58}},
1094--1121 (2006).

\bibitem{BD08} T. Blumensath, M. E. Davies,
``Gradient Pursuits'', {\em{IEEE Transactions on Signal Processing}}, {\bf{56}}, 2370 -- 2382 (2008).

\bibitem{NT09} D. Needell and J.A. Tropp,
 ``CoSaMP: Iterative signal recovery from incomplete and inaccurate samples'',
\newblock{\em{Applied and Computational Harmonic Analysis}},
{\bf{26}}, 301--321 (2009).

%\bibitem{WKS12} J. Wang, S. Kwon, B. Shim,
%``Generalized Orthogonal Matching Pursuit'',{\em{IEEE Transactions on Signal Processing}}, {\bf{60}}, 6202 -- 6216 (2012).

%\bibitem{KE16}
%N. B. Karahanoglu, H. Erdogan,
%``Improving A$\ast$OMP: Theoretical and empirical analyses with a novel dynamic cost model'',  {\em{Signal Processing}},
%{\bf{118}}, 62--74 (2016).


%\bibitem{CZL15} S. Agili, D. B. Bjornberg, A. Morales, 
%``Optimized search over the Gabor dictionary for note
%         decomposition and recognition'', 
%{\em{Journal of the Franklin Institute 344 (2007)}},
%      969--990.


\bibitem{LSL14}
M. Luo, F. Sun, H. Liu, Z Lid,  
``A novel T–S fuzzy systems identification with block structured sparse representation'', 
{\em{Journal of the Franklin Institute}}
 {\bf{351}}, 3508--3523 (2014).


\bibitem{YRV16}
C. You C., D. Robinson, R. Vidal, 
``Scalable sparse subspace clustering by orthogonal matching pursuit'', 
Proceedings of Conference on Computer Vision and Pattern Recognition,  3918--3927 (2016).


\bibitem{YLC19} J. You, Y.  Liu,  J. Chen, F. Ding,
``Iterative identification for multiple-input systems with time-delays based on greedy pursuit and auxiliary model'', 
{\em{Journal of the Franklin Institute}}, {\bf{356,}} 5819--5833 (2019). 


\bibitem{FS81}
J. H. Friedman and W. Stuetzle,
``Projection Pursuit Regression'',
{\em{Journal of the American Statistical Association}},
{\bf{76}}, 817-- 823 (1981).

\bibitem{Jon87} L. K. Jones, ``On a conjecture of
Huber concerning the convergence of Projection Pursuit
Regression'', {\em{Ann. Statist.}} {\bf{15}}, 880--882
(1987).
\bibitem{RNB13} L. Rebollo-Neira and  J. Bowley,
``Sparse representation of astronomical images'',
{\em{Journal of The Optical Society of America A,}}
{\bf{30}}, 758--768 (2013).

%\bibitem{ZSZ15} J. Zhao, R. Song, J. Zhao, W-P. Zhu, ``New conditions for uniformly recovering sparse signals via orthogonal matching pursuit'',
% {\em{Signal Processing}}, {\bf{106}}, 106--113 (2015).

%\bibitem{LSW15} Bo Li, Yi Shen, Zhenghua Wu, Jia Li,
%``Sufficient conditions for generalized Orthogonal Matching Pursuit in noisy case'',
%{\em{Signal Processing}}, {\bf{108}}, 111 -- 123 (2015).
%
%\bibitem{LSR15} Bo Li, Yi Shen, Sreeraman Rajan,
%T. Kirubarajan, ``Theoretical results for sparse signal recovery with noises using generalized OMP algorithm'',
%{\em{Signal Processing}}, {\bf{117}}, 270 -- 278 (2015).

%\bibitem{BTZ17}
%G. Bai, R. Tao, J. Zhao, X. Bai,
%``Parameter-searched OMP method for eliminating basis mismatch in space-time spectrum estimation'',
% {\em{Signal Processing}}, {\bf{138}}, 11--15 (2017).

\bibitem{LRN17} L. Rebollo-Neira,
``Effective sparse representation of X-Ray medical image'', 
{\em{International Journal for Numerical Methods in Biomedical Engineering,}} (2017) e2886. {\tt{https://doi.org/10.1002/cnm.2886}}.

\bibitem{RNW19} L. Rebollo-Neira and  D. Whitehouse,
``Sparse representation of 3D images for piecewise dimensionality reduction with high quality reconstruction'', 
{\em{Array}}, {\bf{1}}, {\tt{doi = 10.1016/j.array.2019.100001}} (2019). 

\bibitem{RNA16} L.~Rebollo-Neira and G. Aggarwal
``A dedicated greedy pursuit algorithm for
sparse spectral representation of music sound'',
{\em{Journal of the Acoustic Society of America}},
{\bf{140}}, 2933 (2016). 

\bibitem{LRN06}  L.~Rebollo-Neira, 
``On non-orthogonal signal representation'',
in New Topics in Mathematical Physics Research,
 Nova Science Publisher (2006).

\bibitem{CKP12} P. G. Casazza and G. Kutyniok  
and F. Philipp, ``Introduction to Finite Frame 
Theory'', in Finite Frames: Theory and Applications, 
1--54, Springer, 2012.

\bibitem{BS99}
 R. G. Bartle and  D, R. Sherbert,
Introduction to Real Analysis,
John Wiley \& Sons, (1999).
 4th Edition, 
2012.

\bibitem{Bjo96}
A. Bj\"orck, ``Numerical methods for least square problems," SIAM, 1996.

\bibitem{GV96}
G. H. Golub and  C. F. Van Loan, Matrix Computations,
The John Hopkins University Press, Baltimore,  4th Edition, 
2012.

\bibitem{Hig02}
N. J. Higham, ``Accuracy and stability of numerical algoriths," SIAM, 2002.

\bibitem{HJ91}
R. A. Horn and C. R. Johnson,
Topics in Matrix Analysis, Cambridge University Press,
(1991).
%
%\bibitem{Wil65}
%J.H. Wilkinson, The Algebraic Eigenvalue Problem, Oxford: Clarendon Press 1965.
%
\bibitem{Bjo67}
A. Bj\"orck, 
``Solving linear least squares problems by Gram-Schmidt orthogonalization'', {\em{BIT Numerical Mathematics}}, {\bf{7}}, 
1--21 (1967).

\bibitem{Bjo94}
A. Bj\"orck, ``Numerics of Gram-Schmidt orthogonalization," 
{\em{Linear Algebra and its Applications}}, {\bf{197}}, 
297--316 (1994).

\bibitem{Wil65}
J.H. Wilkinson, The Algebraic Eigenvalue Problem, Oxford: Clarendon Press 1965.

\bibitem{GLE05}
L. Giraud, J. Langou, M. Rozlo\u{z}n{\'i}k, and J. van den Eshof, ``Rounding error analysis of the classical Gram-Schmidt orthogonalization process," {\em{Numerische Mathematik}}, 
{\bf{101}}, 87--100 (2005).

\bibitem{GLR05}
L. Giraud,  J. Langou and M. Rozlo\u{z}n{\'i}k, ``The loss of orthogonality in the Gram-Schmidt orthogonalization process," 
{\em{Computers \& Mathematics with Applications}}, {\bf{50}}, 1069--1075 (2005).

\bibitem{Ste05} G. W. Stewart,
``Error Analysis of the Quasi-Gram-Schmidt Algorithm''
{\em{SIAM Journal on Matrix Analysis and Applications}}, {\bf{27}},
493--506 (2005).

%\bibitem{JTP10} P. Jir{\'a}nek and D. Titley-Peloquin,
%``Estimating the Backward Error in LSQR'',
%{\em{SIAM Journal on Matrix Analysis and Applications}}
% {\bf{31}}, 2055--2074 (2010).


\bibitem{VVl10} E. S. Van Vleck,
``On the Error in the Product QR Decomposition'',
{\em{SIAM Journal on Matrix Analysis and Applications}}., {\bf{31}}, 1775--1791
(2010).

%\bibitem{Grc10} Joseph F. Grcar,
%``Spectral Condition Numbers of Orthogonal Projections and Full Rank Linear Least Squares Residuals'',
%{\em{SIAM Journal on Matrix Analysis and Applications}}  {\bf{31}}, 2934-- 2949 (2010).

\bibitem{Ste11} G. W. Stewart,
``On the Numerical Analysis of Oblique Projectors'',
{\em{SIAM Journal on Matrix Analysis and Applications}},
 {\bf{32}}, 309--348 (2011).

\bibitem{LBG13}
S. J. Leon, A. Bj\"orck and W. Gander,
``Gram-Schmidt orthogonalization: 100 years and more''
Authors'', {\em{Numerical Linear Algebra}},
{\em{Numer. Linear Algebra Appl.}}, {\bf{20}}, 492--532 (2013) doi:10.1002/nla.1839

\bibitem{RNMB13} L.~Rebollo-Neira, R. Matiol, and S. Bibi,
``Hierarchized block wise image approximation
by greedy pursuit strategies,'' {\em{IEEE Signal Process. Letters}}, {\bf{20}}, 1175--1178 (2013).

\bibitem{LRN16}  L.~Rebollo-Neira,
``Cooperative greedy pursuit strategies for sparse
signal representation by partitioning'',
{\em{Signal Processing,}} {\bf{125}}, 365--375 (2016).

%\bibitem{BS99}
% R. G. Bartle and  D, R. Sherbert,
%Introduction to Real Analysis,
%John Wiley \& Sons, (1999).

\bibitem{webpage}
{\tt{http://www.nonlinear-approx.info/examples/node04.html}}
(Last access April 2020).

\bibitem{webpage1} {\tt{http://www.nonlinear-approx.info/examples/node1.html}} (Last access April 2020).

\bibitem{webpage2} {\tt{http://www.nonlinear-approx.info/examples/node09.html}} (Last access April 2020).

\end{thebibliography}
\end{document}